 \newtheorem{theorem}{Theorem}
  \newtheorem{definition}{Definition}
   \newtheorem{proof}{Proof}
\begin{document}

\title{\rule{17cm}{2.4pt} \\ \textbf{Finite Mixture Model of Nonparametric Density Estimation using Sampling Importance Resampling for Persistence Landscape}\\\rule{17cm}{1pt}}

\author{
    Farzad Eskandari \\
    \small{Faculty of Mathematics and Computer Science}\\
\small{Allameh Tabatabai University}\\
    \small{Tehran, Iran} \\
    \small{ffeskandari@yahoo.com}
    \and
    Soroush Pakniat\\
    \small{Faculty of Mathematics and Computer Science}\\
\small{Allameh Tabatabai University}\\
\small{Tehran, Iran} \\
\small{sorush.pakniat@atu.ac.ir}
}
\date{}
\maketitle
\begin{abstract}
Considering the creation of persistence landscape on a parametrized curve and structure of sampling, there exists a random process for which a finite mixture model of persistence landscape (FMMPL) can provide a better description for a given dataset. In this paper, a nonparametric approach for computing integrated mean of square error (IMSE) in persistence landscape has been presented. As a result, FMMPL is more accurate than the another way. Also, the sampling importance resampling (SIR) has been presented a better description of important landmark from parametrized curve. The result, provides more accuracy and less space complexity than the landmarks selected with simple sampling.
\end{abstract}
\providecommand{\keywords}[1]{\textbf{\textit{Keywords:}} #1}
\keywords{finite mixture model, statistical topology, sampling importance resampling, elastic distance, persistence landscape}
\section{Introduction}
Recently, a considerable growth has been witnessed in the rate of data generation in modern sciences and engineering. Dimensionality is one of the restrictions in visualization and analysis of the dataset. The geometry and topology can be considered as effective tools for studying distance function and how connecting the component and classification of loops. A few of the important points of topology are less sensitive to the selected metric compared to the geometric approach. Typically, the selected metric cannot represent the real distance between points, be coordinated freely, preserve properties of space under continuous deformation, and represent shape as compressed\cite{CarlssonTopologyData}, \cite{CarlssonSurveyActa}. Topological data analysis(TDA) has two fundamental issues \cite{Ghrist}: 
\begin {enumerate*}[label=(\roman*)]
\item how one infers high dimensional structure from low dimensional representation; \item how one assembles discrete points into global structure. 
\end {enumerate*} The mapper and persistence homology are two major approaches in TDA. The mapper method covers a metric space and inverse of mapping, construct the simplicial complex with control of resolution, and represent statistical inference to shape graph \cite{singh}. The persistence homology constructs simplicial complex from point cloud depending on proximity parameter, which is a topological object. Next, based on a theorem in algebraic topology, it applies persistence homology to this object. \newline
Although nodes are connected in all graph-based models, in many real systems such a nondyadic relation, they cannot represent the nature of the system. One possible solution for this restriction is the use of hypergraph that records all relations; however, it is not interested due to its computational complexity. In this regard, we use simplicial complex that can represent compact, computable, and non-pairwise subgroup relations of interest from the population. \newline
The use of barcode and persistence diagram combined with statistics and machine learning provides an alternative approach that called persistence landscape. It is a function that can use the vector space structure of its underlying function space. This function space is a separable Banach space that allows creating a random variable with a value in such space. \newline
The space of the persistence diagram is geometrically very complicated. In order to estimate Fr$\acute{e}$chet mean from the set of diagrams ($X_1 , \ldots , X_n$) by \cite{TurnerFrechet}, the authors in \cite{TurnerMeans} showed that the mean of the diagram is not unique but is unique for a special class of persistence diagram. Moreover, it is known that the space of persistence diagram is analogous to $L^p$ space. As a result, it is not plausible to use any parametric models for distribution. In this regard,  \cite{RobinsonArXiv} used randomization test where two sets of diagrams were drawn from the same single distribution of diagrams. In \cite{MileykoProbabilityMeasures}, a theoretical basis is provided for a statistical treatment that supports expectations, variance, percentiles, and conditional probabilities on persistence diagrams. In \cite{MichelStatistical}, an alternative function is introduced on the statistical analysis of the distance to measure (DTM) that estimates the persistence diagram on metric space. In \cite{BlumbergStatistical}, persistence homology is adapted for computing confidence interval and hypothesis testing. Finally, in \cite{ChazalStochastic}, the convergence of the average landscapes and bootstrap is investigated. \newline
The scalar function can be used instead of metric space because of its ease of use. It is of note that since a small perturbation of landmarks has small changes at the scalar function, we need to prove the stability for alternative function. A proof of stability for persistence landscape and for persistence entropy is presented in \cite{JMLR:v16:bubenik15a} and \cite{Atienta}, respectively. \newline
There are several interesting applications of persistence invariant; i.e., the use of persistence homology for solving coverage problem in sensor networks by \cite{SilvaSensorNetworks}, modeling the spaces to patches pixels, and describing the global topological structure for patches \cite{CarlssonNaturalImage}, computing persistence homology for identifying the global structure of similarities between data by \cite{WagnerTowards}, applying persistence measures for the analysis of the observed spatial distribution of galaxies with Megaparsec scales by \cite{PranavCosmicWeb}, simultaneous use of barcode and persistence landscape for identifying the changes in community structure between brain regions that form loops in functional brain for three days \cite{Stolz2017}. Also, in \cite{entropy1} and \cite{entropy2}, based on Shannon entropy, a persistent entropy was defined on normalized barcodes. \newline
TDA has some fundamental aspects. In \cite{BubenikCategorification},  a persistence homology was recreated based on a category theory and some features of $(\mathbb{R}, \leq)$, consisting of a set of objects and morphisms, were investigated. Also, \cite{BubenikGromovHausdorff} presented a generalization of Hausdorff distance, Gromov-Hausdorff distance, and the space of metric spaces in the form of a categorical view. To generalize the persistence module with the category theory and soft stability theorem see \cite{BubenikMetricGeneralized}. Moreover, in \cite{BubenikHigherInterpolation}, the authors present a categorical language for construction embedding of a metric space into the metric space of persistence module. \newline
The shape is all the geometrical information that remains when location, scale, and rotational effects are removed from a given object \cite{ShapeAnalysis}. So, a shape can be represented by a finite set of the points located on specific shape regions, which are called landmarks or sampling points. There are three basic types of landmarks in our applications; i.e., scientific, mathematical, and pseudo-landmarks. \newline
In the present work, we aimed at proposing a nonparametric inference of data to infer an unknown quantity to keep the number of underlying assumptions as low as possible. Our approach would be of great assistance in case the modeler is unable to find a theoretical distribution that provides a good model for the input data. Populations of individuals often are divided into subgroups. The task in examining a sample of measurements to discern and describe subgroups of individuals -- even when there is no observable variable that readily indexes into the subgroup an individual properly belongs -- is sometimes called as unsupervised clustering in the literature. Indeed, mixture models may be generally considered as a subset of clustering methods known as model-based clustering \cite{Chauveau}. Identifiability is a major concern of finite mixture models. Estimation procedures may not be well defined and asymptotic theory may not be held if a model is not identifiable. The main objective of this work is to present a nonparametric density estimation for mixing the persistence landscape in Banach space. Also, the elastic shape analysis search is performed for an optimal reparameterization, which relies on the square root velocity function. This function is invariance to translation. We apply sampling importance resampling for posterior distribution that estimated characteristics of the whole population of the curve and gathering all the needed information, which is a time-consuming and costly task. Moreover, each methods that can extract landmarks from curve accurately could provide a better estimation for the parameterised curve. Finally, we make inferences about the population with the help of sampling.
\newline
The remainder of this paper is organized as follows: In section \ref{section:Background}, we review the necessary background of differential geometry, sampling importance resampling, persistence landscape, finite mixture model, and nonparametric density estimation. In section \ref{section:Applications}, we apply our approach on a sampling of the object and evaluate sampling with an average of distances and a mixture model with IMSE. Finally, in the discussion section, we propose several approaches for the future studies.
\section{Background} \label{section:Background}
\subsection*{Square Root Velocity Function}
In this section, we use differentiation and integration to describe the curve in n-dimentional space \cite{baar}. Consider an interval $I \subset \mathbb{R}$, wherein a parametrized curve is a map $C:I \rightarrow \mathbb{R}^n$ that is diffrentiated infinitely. A parametrized curve $C(t)$ is regular if $\text{\.{C}}(t) \neq 0$ for all $t \in I$. For example, a circular curve around the origin $(0,0)$ with radius $r > 0$ is in the form of
\begin{align*}
\begin{array}[t]{c} C:\mathbb{R}  \rightarrow \mathbb{R}^2, \\
C(t) = 
 \begin{pmatrix}
 r.\cos(t) \\
  r.\sin(t)
 \end{pmatrix}.
\end{array}
\end{align*}
Let $C:I \rightarrow \mathbb{R}^n$ be a parametrized curve, a parameter transformation of $C$ is the bijective map $\varphi:J \rightarrow I$ such that $J \subset \mathbb{R}$ and both $\varphi$ and $\varphi^{-1}:I \rightarrow J$ are often differentiated infinitely. A paramterised curve $\tilde{C}:C \circ \varphi:J \rightarrow \mathbb{R}^n$ is called a reparametrisation of $C$. A curve is an equivalence class of regular parametrized curves such that curves can reparametrise each other. A parametrisation curve $C:\mathbb{R} \rightarrow \mathbb{R}^n$ is periodic with period $L$, if for all $t \in \mathbb{R}$, we have $C(t+L)=C(t)$ for $L > 0$. In order to compare two curves $C_1$ and $C_2$, the metric must be specified on the shape space, invariant to rigid motion, scaling, and reparametrisation. For this issue, \cite{Srivastava} introduced the square root velocity function(SRVF), which is $q(t)=\dfrac{\text{\.{C}}(t)}{\sqrt{|\text{\.{C}}(t)|}}$, where $|.|$ is the euclidean norm.
\subsection*{Sampling Importance Resampling} \label{sub:BSIR}
In order to apply Bayes rule, first, we obtain the prior distribution of landmarks which $\theta_1, \ldots, \theta_k \sim f(\theta)$. Landmark locations are certainly not independent, thus we apply order statistics on $\theta_i$ that $P(\theta_i = \theta_j)=0$ for all $i \neq j$. The distribution function of them are $\theta_{(1)} <  \ldots < \theta_{(k)} \sim g(\theta)$, $\theta_{(i)}$ is related to $\theta_i$, thus
\begin{align*}
\pi(\theta_{(1)}, \ldots, \theta_{(k)}) = g(\theta_{(1)}, \ldots, \theta_{(k)})= n \oc \prod_{i=1}^n f(\theta_{(i)})
\end{align*}
proof of this theorem in \cite{Gibbons2010}. Now, we would like to choose likelihood function for domain of parametrized curve. Let $L_\theta$ is a linear interpolation of $C(\theta)$, the elastic distance between landmarks as follow
\begin{equation}
d(C,L_\theta)=\sqrt{\int |q_C(t) - q_{L_{\theta}}|^2 dt}
\label{eq:elastic_dis}
\end{equation}
where $q_C(t)$ is $k$ times of simple random sampling of parametrized curve and $q_{L_{\theta}}$ be the linear interpolation of it.\newline
A small value of equation (\ref{eq:elastic_dis}) represents that sampling well. After that,  we compute the elastic distance, define a likelihood function as follows
\begin{align*}
f( C | \theta, \eta) \propto \eta^{-n} \exp( \dfrac{-1}{2 \eta} d^2(C,L_\theta)).
\end{align*}
The posterior can be obtained using Bayes rule as follows
\begin{align*}
\pi(\theta, \eta | C )  \propto \pi(\theta). \pi_\eta(\eta).f(C|\theta,\eta).
\end{align*}
the variance $\eta$ is a nuisance parameter, which we integrate out
\begin{equation}
\pi(\theta | C ) \propto \pi(\theta) \int_0^\pi \pi_\eta(\eta).f(C|\theta,\eta) d \eta \propto \pi(\theta) \int_0^\pi f(C | \theta, \eta) d \eta = \pi(\theta) f(C | \theta).
\end{equation}
The main issue is to find function $f(\theta)$ with respect to distribution function $P(\theta)$, that is mean compute $E[f]= \int f(\theta) P(\theta) d\theta$. We assume that the computation of the expected value is hard. The main idea behind the sampling methods, is to obtain samples $\theta^{(\ell)}$ such that be i.i.d and $P(\theta)$ was distribution function, thus we have 
$\hat{f}=\dfrac{1}{L} \sum_{\ell = 1}^L f(\theta^{(\ell)})$. The importance resampling method estimate of distribution function as follows
\begin{itemize}
\item Let select of the sample from $P(\theta)$ is hard and use of simpler function $q(\theta)$ which is known for proposal distribution, we have
\begin{align*}
E[f]=\int f(\theta)P(\theta) d\theta= \int f(\theta) \dfrac{P(\theta)}{q(\theta)}.q(\theta) d \theta \simeq \dfrac{1}{L} \sum_{\ell=1}^L \dfrac{P(\theta^{(\ell)})}{q(\theta^{(\ell)})} f(\theta^{(\ell)});
\end{align*}
\item We select $M$ sample from $\pi(\theta)$ with simple random sampling in the interval $I$ for a specified $k$. We select $k$ sample from the population as a set which use to the computation of distance in equation \ref{eq:elastic_dis};
\item We obtain weight from $w(\theta_i)= f( C | \theta_i)$ and then sampling with size $s$ such that probability of each $\theta_i$ is $\dfrac{w(\theta_i)}{\sum_{i=1}^M w(\theta_i)}$;
\item We choose $k$ that has minimum elastic distance for posterior samples.
\end{itemize}
\subsection*{Finite Mixture Models}
Let $Y=(Y_1, \ldots, Y_n)$ are random variables with size $n$ such that $Y_j$ is a $1$-dimensional random variable  $f(y_j)$ on $\mathbb{R}$ and $y=(y_1, \ldots, y_n)$ denotes an observed random samples. We assume that the density $f(y_j)$ of $Y_j$ can be written in the form
\begin{equation}\label{eq:finiteMix}
f(y_j)= \sum_{i=1}^g \pi_i f_i(y_j)
\end{equation}
where the $f_i(y_j)$ are densities and $\pi_i$ is known as mixing proporation, which is $0 \leq \pi_i \leq 1$, and $\sum_{i=1}^g \pi_i = 1$.
\subsubsection*{Identifiability}
Let the probability space $(\Omega, \mathcal{F}, \mathcal{P})$ that $\Omega$ is sample space, $\mathcal{F}$ be a $\sigma$-algebra of events, $\mathcal{P}= \{ P_\theta, \theta \in \Omega \}$ be a probability measure, and $X$ be a random variable. For $X \sim P_{\theta, \xi}$, there exist pairs $(\theta_1, \xi_1)$ and $(\theta_2, \xi_2)$ with $\theta_1 \neq \theta_2$ for which $P_{\theta_1,\xi_1} = P_{\theta_{2},\xi_{2}}$ showing the parameter $\theta$ to be unidentifiable. It has to be noted that a parameter that is unidentifiable cannot be estimated consistently (see \cite{Lehmann} and \cite{Eskandari2016})
\begin{definition}
The model is identifiable if for any two parameters $\psi$ and $\psi^\ast$ we have 
\begin{align}
\sum_{i=1}^g \pi_i f_i(y_j) = \sum_{i=1}^{g{^\ast}} \pi_{i}^\ast f_i(y_j)
\end{align}
for all possible value of $y_j$, implies $g = g^\ast$ and $\psi = \psi^\ast$.
\end{definition}
\subsection*{Persistence Landscape}
A simplicial complex $K$ is defined for representing a manifold and triangulation of topological space $X$. $K$ is a combinatorial object that is stored easily in computer memory and can be constructed by several methods in high dimensions with any metric space. A subcomplex $L$ of simplicial complex $K$ is a simplicial complex such that $L \subseteq K.$ A filtration of simplicial complex $K$ is a nested sequence of subcomplexs such that $K^0 \subseteq K^1 \subseteq \ldots \subseteq K^m$. To see how this object is created, the readers can refer to (\cite{KhuyenGeneralizedCech}, \cite{ChambersVietorisRips}, \cite{DeyGraphInduced}, and \cite{SilvaWitness}). The simplex tree is a data structure with efficient implementation of the basic operation and topological operation on simplicial complexes of any dimension with a trie structure (\cite{Boissonnat2014}).  \newline
The fundamental group of space $X$ ($\pi_1(X,x_0)$ at the basepoint $x_0$), as an important functor in algebraic topology, consist of loops and deformations of loops. The fundamental group is one of the homotopy group $\pi_n(X,x_0)$ that has a higher differentiating power from space $X$; however, this invariant of topological space $X$  depends on smooth maps and is very complicated to compute in high dimensions. Thus, we must use an invariant of topological space that is computable on the simplicial complex. Homology groups show how cells of dimension $n$ attach to subcomplex of dimension $n-1$ or describe holes in the dimension of $n$ (connected components, loops, trapped volumes,etc.). The nth homology group is defined as $H_n= \ker \partial_n / \mbox{im} \partial_{n+1} = Z_n / B_n$ such that $\partial_n$ is the boundary homomorphism of subcomplexs, $Z_n$ is the cycle group and $B_n$ is boundary group. The nth Betti number $\beta_n$ of a simplicial complex $K$ is defined as $\beta_n = rank(Z_n) - rank(B_n)$. Through filtration step, we tend to extract invariant that remains fixed in this process, thus persistence homology satisfies this criterion for space-time analysis. Let $K^l$ be a filtration of simplicial complex $K$, the pth persistence of nth homology group of $K^l$ is $H_n^{b,d} = Z_n^b / (B_n^{b+d} \cap Z_n^b)$. The Betti number of the pth persistence of the nth homology group is defined as $\beta_n^{b,d}$ for the rank of free subgroup $(H_n^{b,d})$. To visualize persistence in space-time analysis, we should find the interval of $(i,j)$ that is invariant constantly through the filtration and obtain a topological summary from the point cloud (see \cite{AfraBook} and  \cite{Zomorodian2005}). \newline
Now, by rewriting the Betti number of the pth persistence of nth homology group, we have
\begin{equation*}
\lambda (b,d) =
  \begin{cases}
    \beta^{b,d}     &  \mbox{if} \ b \leq d \\
    0   & otherwise
  \end{cases}
\end{equation*}
where $b$ is the birth and $d$ is the death. To convert $\lambda(b,d)$ function to a decreasing function, we change coordinate on it, Let $m = \dfrac{b+d}{2}$ and $h = \dfrac{d-b}{2}$. The rescaled rank function is
\begin{equation*}
\lambda (m,h) =
  \begin{cases}
    \beta^{m-h,m+h}     &  \mbox{if} \ h \geq 0 \\
    0   & otherwise
  \end{cases}
\end{equation*}
\begin{definition}
The persistence landscape is a function  $\lambda: \mathbb{N} \times \mathbb{R} \rightarrow \bar{\mathbb{R}}$ where $\bar{\mathbb{R}}$ denoted the extended real numbers (introduced by \cite{JMLR:v16:bubenik15a}). In the other words, persistence landscape is sequence of function $\lambda_k : \mathbb{R} \rightarrow \bar{\mathbb{R}}$ such that
\begin{equation}\label{eq:landscapeBubenik}
\lambda_k (t) = \sup (m \geq 0 | \beta^{t-m,t+m} \geq k).
\end{equation}
\end{definition}
\subsection*{Nonparametric Density Estimation}
The goal of nonparametric  density estimation is to estimate $f$ with as few assumptions about $f$ as possible. The estimator is defined by $\widehat{f}_n$. We evaluate the quality of an estimator $\widehat{f}_n$ with the risk, or integrated mean squared error, $R=\mathbb{E}(L)$, where
\begin{align}
L = \int (\widehat{f}_n(x) - f(x))^2 dx
\end{align}
is the integrated squared error loss function. The estimators depend on the smoothing parameter, $h$, chosen by minimizing an estimate of the risk. The loss function, refer to as function $h$ from now on, is
\begin{align*}
L  \begin{array}[t]{l} = {\displaystyle \int} (\widehat{f}_n(x) - f(x))^2 dx \\
= {\displaystyle \int} \widehat{f}_n^2(x) dx - 2 {\displaystyle \int} \widehat{f}_n(x) f(x) dx + {\displaystyle \int} f^2(x) dx.\end{array}
\end{align*}
The last term does not depend on $h$ so minimizing the loss is equivalent to minimizing the expected value; therefore, the cross-validation estimator of risk is
\begin{equation} \label{eq:crossValidation}
\widehat{J}(h) = \int \big( \widehat{f}_n(x) \big)^2 dx - \dfrac{2}{n} \sum_{i=1}^n \widehat{f}_{(-i)} (X_i)
\end{equation}
where $\widehat{f}_{(-i)}$ is the density estimator obtained after removing the $i^{th}$ observation. 
\begin{theorem} \label{eq:theorem612}
Suppose that $f^\prime$ is absolutely continuous and that $\int \big( f^\prime (u) \big)^2 du < \infty$, Then,
\begin{align}
R(\widehat{f}_n , f ) = \dfrac{h^2}{12} \int \big( f^\prime (u) \big)^2 du + \dfrac{1}{nh} + o(h^2) + o (\dfrac{1}{n}).
\end{align}
Where $x_n = o(a_n)$, suggesting that $\lim_{n \rightarrow \infty} x_n / a_n = 0$.  The value $h^\ast$ that minimizes (theorem \ref{eq:theorem612}) is
\begin{align}
h^\ast = \dfrac{1}{n^{1/3}} \Bigg( \dfrac{6}{\int ( f^\prime (u) )^2 du } \Bigg)^{1/3}.
\end{align}
With this choice of binwidth,
\begin{align}
R(\widehat{f}_n , f) \sim \dfrac{C}{n^{2/3}}
\end{align}
where $C = (3/4)^{2/3} \Big( \displaystyle\int \big( f^\prime (u) \big)^2 du \Big)^{1/3}$.
\end{theorem}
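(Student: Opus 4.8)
The plan is to read $\widehat{f}_n$ as the histogram estimator of binwidth $h$ (the constants $h^2/12$ and $1/(nh)$ are the histogram signatures) and to bound its risk through the bias--variance decomposition integrated over $x$. First I would partition $\mathbb{R}$ into bins $B_j$ of length $h$, set $p_j = \int_{B_j} f(u)\,du = P(X \in B_j)$, and observe that on $B_j$ the estimator is constant, $\widehat{f}_n(x) = \widehat{p}_j/h$, where the count $n\widehat{p}_j$ is $\mathrm{Binomial}(n,p_j)$. Starting from
\[
R(\widehat{f}_n,f) = \int \mathrm{Var}\big(\widehat{f}_n(x)\big)\,dx + \int \big(\mathbb{E}\widehat{f}_n(x) - f(x)\big)^2\,dx,
\]
I would evaluate the integrated variance and the integrated squared bias separately.

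For the variance, using $\mathrm{Var}(\widehat{p}_j) = p_j(1-p_j)/n$ gives $\mathrm{Var}(\widehat{f}_n(x)) = p_j(1-p_j)/(nh^2)$ on $B_j$; integrating over the bin (length $h$) and summing yields $\frac{1}{nh}\big(1 - \sum_j p_j^2\big)$. The mean value theorem gives $p_j \approx h f(\xi_j)$, so $\sum_j p_j^2 = O(h)$ and the integrated variance is $\frac{1}{nh}$ plus a term of smaller order, matching the $\frac{1}{nh} + o(1/n)$ in the statement. For the bias, on $B_j = [a_j, a_j+h]$ one has $\mathbb{E}\widehat{f}_n(x) = p_j/h = \frac{1}{h}\int_{B_j} f =: \bar{f}_j$, so the pointwise bias is $\bar{f}_j - f(x)$. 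Using absolute continuity of $f'$ to write $f(x) - f(a_j) = \int_{a_j}^x f'$, I would expand to first order and obtain $\bar{f}_j - f(x) \approx f'(\xi_j)\big(\tfrac{h}{2} - (x - a_j)\big)$. The elementary identity $\int_0^h (\tfrac{h}{2} - v)^2\,dv = h^3/12$ then contributes $\tfrac{h^3}{12} f'(\xi_j)^2$ per bin; summing and recognizing a Riemann sum produces the integrated squared bias $\frac{h^2}{12}\int (f'(u))^2\,du + o(h^2)$.

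Adding the two pieces gives the stated risk formula. Writing $A = \int (f')^2\,du$ and minimizing $g(h) = \tfrac{h^2}{12}A + \tfrac{1}{nh}$, the equation $g'(h) = \tfrac{hA}{6} - \tfrac{1}{nh^2} = 0$ gives $h^3 = 6/(nA)$, i.e. $h^\ast = n^{-1/3}(6/A)^{1/3}$; substituting back and using the identity $(3/2)\,6^{-1/3} = (3/4)^{2/3}$ yields $R \sim C\,n^{-2/3}$ with $C = (3/4)^{2/3} A^{1/3}$.

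I expect the main obstacle to be rigor in the bias step rather than the optimization. I must ensure the per-bin Taylor remainder is genuinely $o(h^3)$ \emph{uniformly} across bins and that the Riemann sum $\sum_j h\, f'(\xi_j)^2$ actually converges to $\int (f')^2$. This is precisely where the hypotheses enter: absolute continuity of $f'$ lets me replace increments of $f$ by exact integrals of $f'$ (controlling the deviation of $f'$ across a bin by its local oscillation), and $\int (f')^2 < \infty$ guarantees that the accumulated error over all bins remains $o(h^2)$, so the two error terms collapse into the stated $o(h^2)$ and $o(1/n)$.
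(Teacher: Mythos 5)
Your proposal is correct and follows essentially the same route as the paper: the paper only sketches the argument --- noting that the IMSE splits as variance plus squared bias, $\bigl(\tfrac{1}{nh} + o(\tfrac{1}{n})\bigr) + \bigl((f^\prime(\tilde{x}_j))^2 \tfrac{h^3}{12} + O(h^4)\bigr)$ --- and defers the details to the cited Wasserman text, whose standard histogram bias--variance derivation (binomial bin counts, per-bin $h^3/12$ identity, Riemann sum, then optimization over $h$) is exactly what you have reconstructed. Your closing remarks on uniformity of the Taylor remainder and convergence of the Riemann sum are precisely the technical points the hypotheses (absolute continuity of $f^\prime$ and $\int (f^\prime)^2 < \infty$) are there to handle.
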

The integrate mean square error is variance $+\text{bias}^2$, so we have $ \Big(\dfrac{1}{nh} + o (\dfrac{1}{n})\Big)+\Big((f^\prime (\tilde{x_j}))^2 \dfrac{h^3}{12} + O(h^4)\Big)$. The proof of theorem \ref{eq:theorem612} can be seen in \cite{WassermanNonparametric}. We see that with an optimally chosen binwidth, the risk decreses to $0$ at rate of $n^{-2/3}$. Moreover, it can be seen that kernel estimators converge at the faster rate  $n^{-4/5}$ and no faster rate is possible in a certain sense.\\
We discuss kernel density estimators, which are smoother and can converge to the true density faster. Here, the word kernel refers to any smooth function $K$ such that $K(x) \geq 0$ and
\begin{equation}
\int K(x) dx = 1, \ \int xK(x) dx = 0 \ \mbox{and} \ \sigma_K^2 \equiv \int x^2 K(x) dx > 0.
\end{equation}
Some commonly used kernels are the following
\renewcommand{\arraystretch}{2.0}
\begin{table}[ht]
\begin{center}
\begin{tabular}{ || p{5cm} | p{5cm} ||  }
\hline
  the Gaussian kernel: & $K(x) = \dfrac{1}{\sqrt{2 \pi}} \exp^{-x^{2}/2}$  \\
  \hline
  the tricube kernel: & $K(x) = \dfrac{70}{81} \Big( 1- |x|^3 \Big)^3 I(x)$ \\
  \hline
\end{tabular}
\end{center}
\end{table}
where 
\begin{align*}
I(x) = \begin{cases}
1 & \mbox{if} \ |x| \ \leq 1 \\
0 & \mbox{otherwise}
\end{cases}
\end{align*}
\begin{definition} \label{def:densityEstimator}
Given a kernel $K$ and a positive number $h$, called the bandwidth, the kernel density estimator is defined as
\begin{equation}
\widehat{f}_n(x) = \dfrac{1}{n} \sum_{i=1}^n \dfrac{1}{n} K \Big( \dfrac{x - X_i}{h} \Big).
\end{equation}
\end{definition}
\begin{theorem}
Assume that $f$ is continuous at $x$, $h_n \rightarrow 0$ and $n h_n \rightarrow \infty$ as $n \rightarrow \infty$. Then, by the weak low of large number(WLLN), $\widehat{f}_n(x) \rightarrow f(x)$.
\end{theorem}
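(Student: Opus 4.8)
The plan is to prove the convergence $\widehat{f}_n(x) \to f(x)$ in probability by the standard bias--variance route, which is the rigorous content lying behind the stated invocation of the weak law of large numbers. Writing $Z_{n,i} = \dfrac{1}{h_n} K\big(\tfrac{x - X_i}{h_n}\big)$, the kernel estimator of Definition \ref{def:densityEstimator} is the sample mean $\widehat{f}_n(x) = \tfrac{1}{n}\sum_{i=1}^n Z_{n,i}$ of independent, identically distributed terms, so it suffices to control its mean and variance and then invoke Chebyshev's inequality.

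First I would compute the bias. Since the $X_i$ all have density $f$, the substitution $t = (x-u)/h_n$ gives
\begin{align*}
\mathbb{E}\big[\widehat{f}_n(x)\big] = \dfrac{1}{h_n}\int K\Big(\dfrac{x-u}{h_n}\Big) f(u)\, du = \int K(t)\, f(x - h_n t)\, dt.
\end{align*}
As $h_n \to 0$, continuity of $f$ at $x$ gives $f(x - h_n t) \to f(x)$ pointwise, and since $\int K(t)\,dt = 1$, dominated convergence yields $\mathbb{E}[\widehat{f}_n(x)] \to f(x)$; hence the bias tends to $0$.

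Next I would bound the variance. Because the $Z_{n,i}$ are i.i.d.,
\begin{align*}
\mathrm{Var}\big(\widehat{f}_n(x)\big) = \dfrac{1}{n}\,\mathrm{Var}(Z_{n,1}) \le \dfrac{1}{n}\,\mathbb{E}\big[Z_{n,1}^2\big] = \dfrac{1}{n h_n}\int K^2(t)\, f(x - h_n t)\, dt.
\end{align*}
The integral converges to the finite constant $f(x)\int K^2(t)\,dt$, while the prefactor $1/(n h_n) \to 0$ by the hypothesis $n h_n \to \infty$; therefore $\mathrm{Var}(\widehat{f}_n(x)) \to 0$. Combining the two estimates, $\mathbb{E}[(\widehat{f}_n(x) - f(x))^2] = \mathrm{bias}^2 + \mathrm{Var} \to 0$, so $\widehat{f}_n(x) \to f(x)$ in $L^2$ and hence in probability, which is exactly the asserted convergence.

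The main obstacle is that the summands $Z_{n,i}$ are \emph{not} identically distributed across different $n$, since their common law depends on the shrinking bandwidth $h_n$; consequently the classical weak law for a single fixed i.i.d.\ sequence does not apply verbatim, and the genuine work is the triangular-array control supplied by the variance bound above. A secondary technical point is verifying the hypotheses for passing both limits under the integral sign, which requires the boundedness and integrability of $K$ guaranteed by $\int K = 1$ together with $K \ge 0$; once these are in place the WLLN reading of the statement is justified.
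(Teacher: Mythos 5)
Your proposal is essentially correct in outline, but it cannot match ``the paper's own proof'' because the paper does not contain one: its entire proof is the sentence ``Please see \cite{WassermanNonparametric}.'' What you have written is precisely the standard argument that lies behind that citation: represent $\widehat{f}_n(x)$ as the average of the triangular array $Z_{n,i} = h_n^{-1}K\big((x-X_i)/h_n\big)$, show the bias vanishes using continuity of $f$ at $x$ as $h_n \to 0$, bound the variance by $(nh_n)^{-1}\int K^2(t)f(x-h_nt)\,dt \to 0$ using $nh_n \to \infty$, and conclude convergence in probability from $L^2$ convergence via Chebyshev. Your observation that the classical WLLN does not apply verbatim --- because the common law of the summands changes with $n$ --- is a genuinely worthwhile clarification of the theorem's phrasing, and you also implicitly repair a typo in Definition \ref{def:densityEstimator}, where the factor written as $\tfrac{1}{n}$ inside the sum should be $\tfrac{1}{h}$.

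There is, however, one technical claim in your closing paragraph that is wrong, although the proof is patchable. You assert that boundedness of $K$ is ``guaranteed by $\int K = 1$ together with $K \ge 0$''; it is not, since a nonnegative smooth density can have arbitrarily tall, thin spikes. More importantly, continuity of $f$ at the single point $x$ provides no integrable dominating function for $t \mapsto K(t)f(x-h_nt)$ on all of $\mathbb{R}$, because $f$ may be unbounded far from $x$; so dominated convergence cannot be invoked as stated, in either the bias step or the variance step. The standard repair (Bochner's lemma, as in Parzen's consistency proof) is to split the integral at $|t| \le \delta/h_n$, where $\delta$ comes from continuity at $x$: on that region $|f(x-h_nt)-f(x)| \le \varepsilon$ uniformly, contributing at most $\varepsilon\int K$, while on $|t| > \delta/h_n$ one controls the contribution using $\int f = 1$ together with a mild decay hypothesis on the kernel, e.g.\ $K$ bounded and $|t|K(t) \to 0$ as $|t| \to \infty$, which the Gaussian and tricube kernels of the paper satisfy. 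Substituting this splitting argument for your two appeals to dominated convergence makes the proof complete.
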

\begin{proof}
Please see \cite{WassermanNonparametric}.
\end{proof}
\section{Experiment} \label{section:Applications}
The persistence landscapes in this section are calculated using TDA \cite{TDARFasy}, and we use alphashape3d and MASS packages for visualization and density estimation, respectively, in \textsf{R} programming language. Alpha shape is the generalization of the convex hull of landmarks torus such that $0 \leq \alpha \leq \infty$. We use only alphashape3d package for visualization of the population sampling with a uniform distribution on the torus. 
\subsection*{Torus}
Let $R$  be the major radius and $r$ as the minor radius. We use an explicit equation in cartesian coordinates for a torus as figure \ref{fig:torus}, which is
\begin{equation}
\Big( R - \sqrt{x^2 + y^2} \Big)^2 + z^2 = r^2.
\end{equation}
 \begin{figure}[ht]
\centering
\includegraphics[scale=0.25]{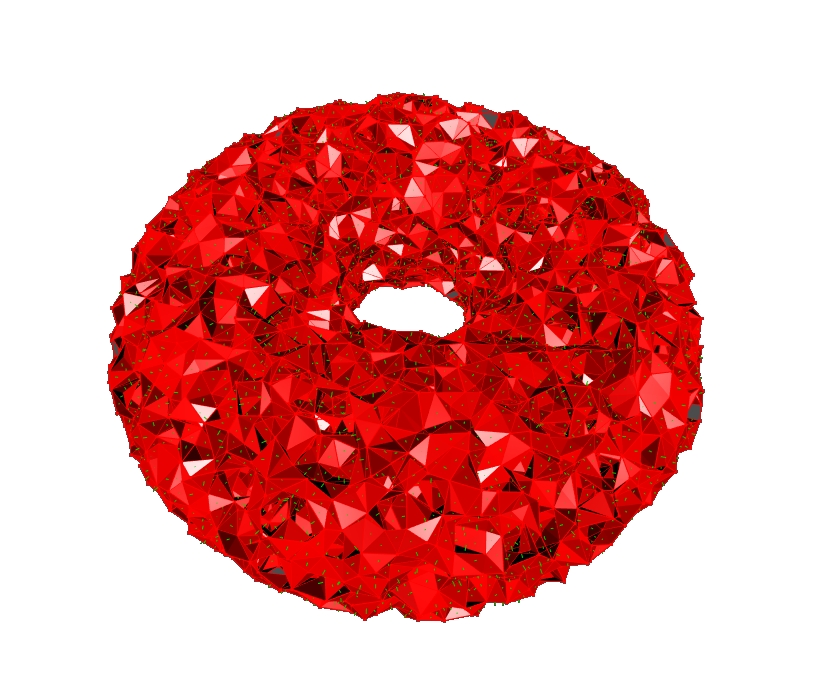}
\caption{The alpha shape \cite{alphaShape} of a finite set with $10000$ landmark from torus with a uniform distribution such that minor radius $r=2$, major radius $R=3$, and $\alpha=0.25$.}
\label{fig:torus}
\end{figure}
To reach the purpose of this study, we took two: \begin {enumerate*}[label=(\roman*)]
\item sampling from population with sampling importance resampling;
\item computation of the finite mixture model from samples.
\end {enumerate*}
First, we set the sampling default to $M=10000$ landmarks with respect to a uniform distribution (figure \ref{fig:torus}) on the parametrized curve as a population Then, we select $s=1000$ landmarks with respect to uniform distribution from $M$ and obtained the probability of each $\theta_i$, $1 \leq i \leq M$ with $k=3$.
\begin{figure}[ht]
\centering
\includegraphics[scale=0.25]{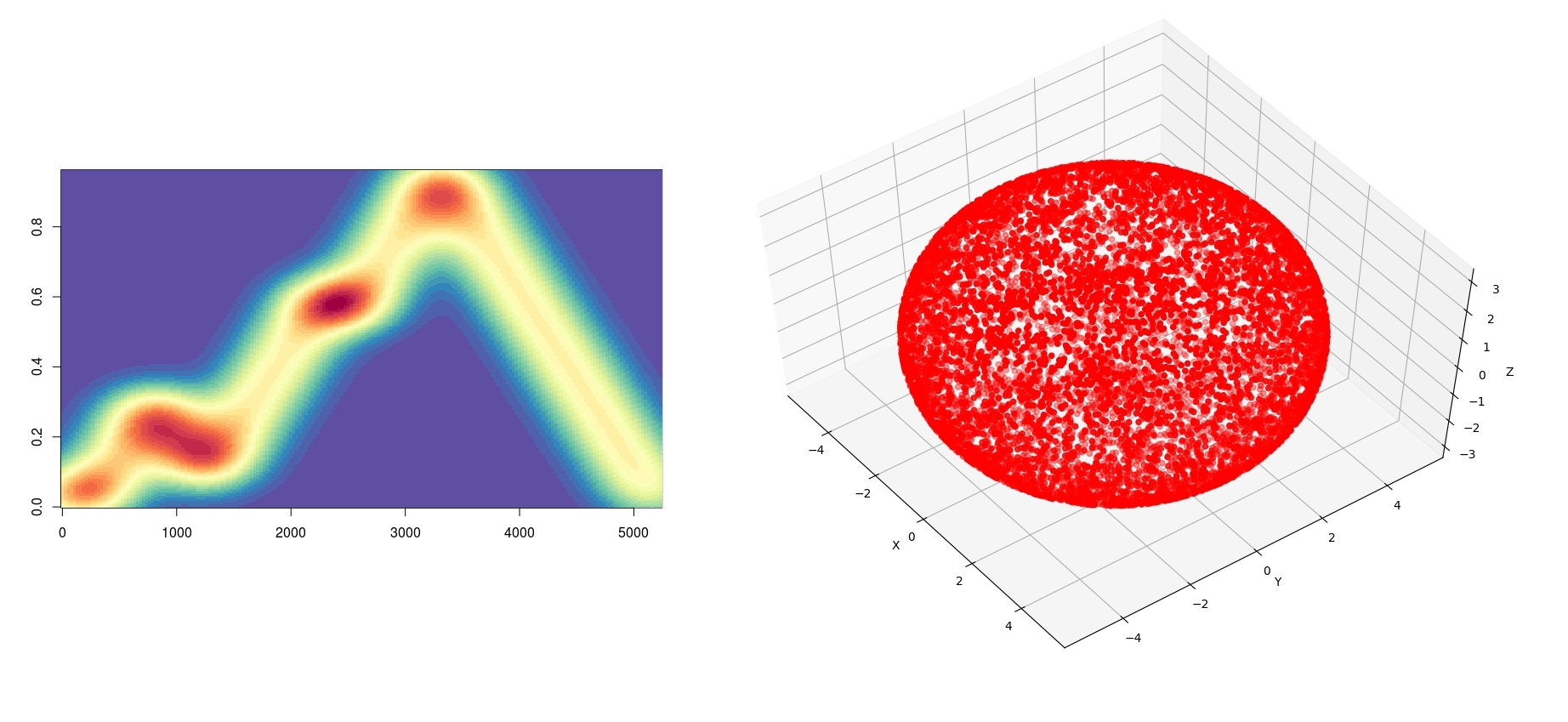} \\
\includegraphics[scale=0.25]{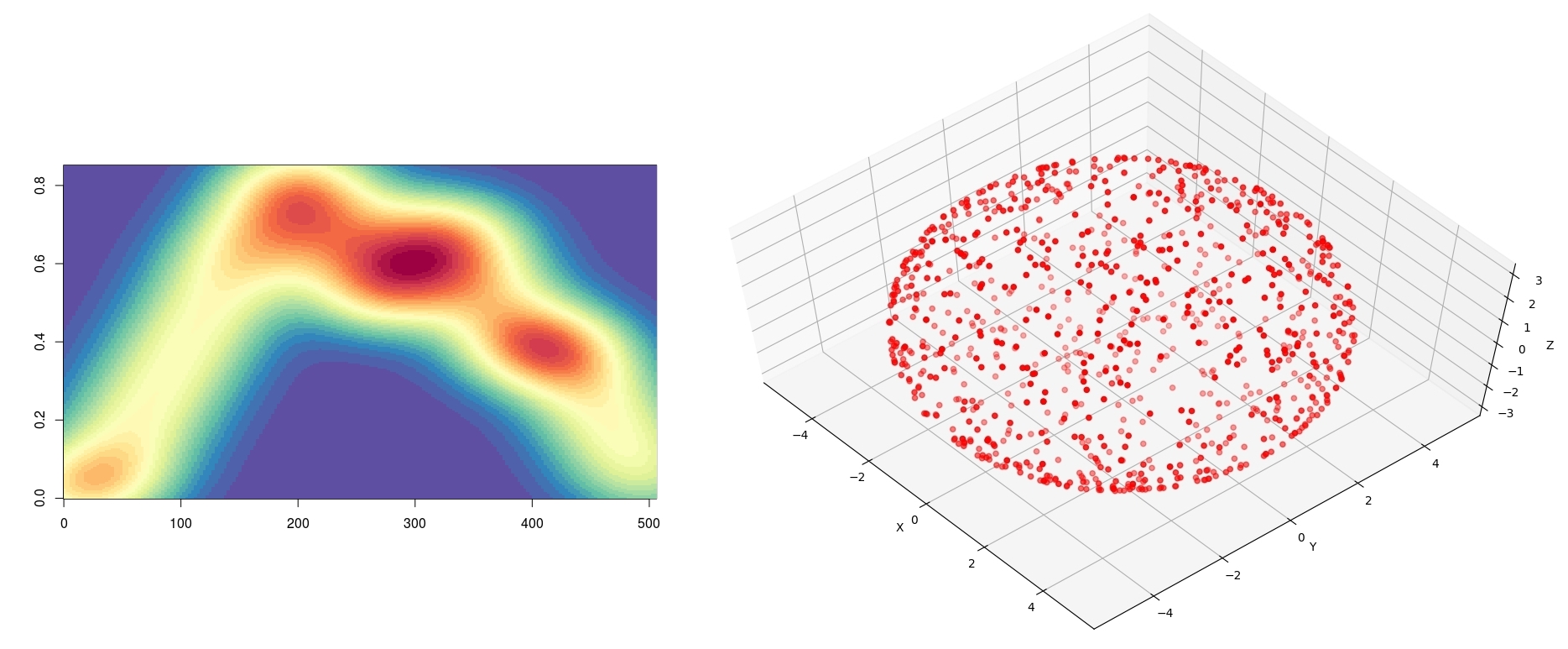} 
\caption{We compute persistence landscape for $10000$ landmarks as population from the torus and visualized the density of these in row $1$ with mean $0.4280793$. Also, We computed the persistence landscape for $1000$ sample with sampling importance resampling approach and visualized the density of these in row $2$ with mean of $0.4264059$ and deviation of $0.07830429$.}
\label{fig:torusLanscape}
\end{figure}
Like some other statistical problems, choosing the number of landmarks $k$ is not a trivial issue. In \cite{Automatic}, for a specified $k$, the posterior sample $\theta_1, \ldots, \theta_s$, the applied $d^2(\beta, L_{\theta_{i}})$ can be computed for $i=1, \ldots ,s$. Finally, the average of distance can be computed for different value of $k$ as shown in figure \ref{fig:sampling} that yielded $k=4$ with $\text{distance}=4.197569$, where $k$ is defined in by equation \ref{eq:elastic_dis}. Table \ref{tbl:selectK} presents the computed distances with different $k$ values.\newline
\begin{table}[ht]
\begin{center}
\begin{tabular}{ *2c }    \toprule
\emph{k} & \emph{elastic distance }  \\\midrule 
$1$ & $4.773316$ \\ 
$2$ & $4.393263$ \\ 
$3$ & $4.318741$ \\
$4$ & $4.197569$ \\
$5$ & $4.583322$ \\
$6$ & $4.369554$ \\ 
$7$ & $4.204580$ \\ 
$8$ & $4.381379$ \\
$9$ & $4.228707$ \\
$10$ & $4.408091$ \\
\bottomrule
 \hline
\end{tabular}
\end{center}
\caption{We run our algorithm with different value of $k$ and computed elastic distance (equation \ref{eq:elastic_dis}) through sampling importance resampling.}
\label{tbl:selectK}
\end{table}
 \begin{figure}[H]
\centering
\includegraphics[scale=0.5]{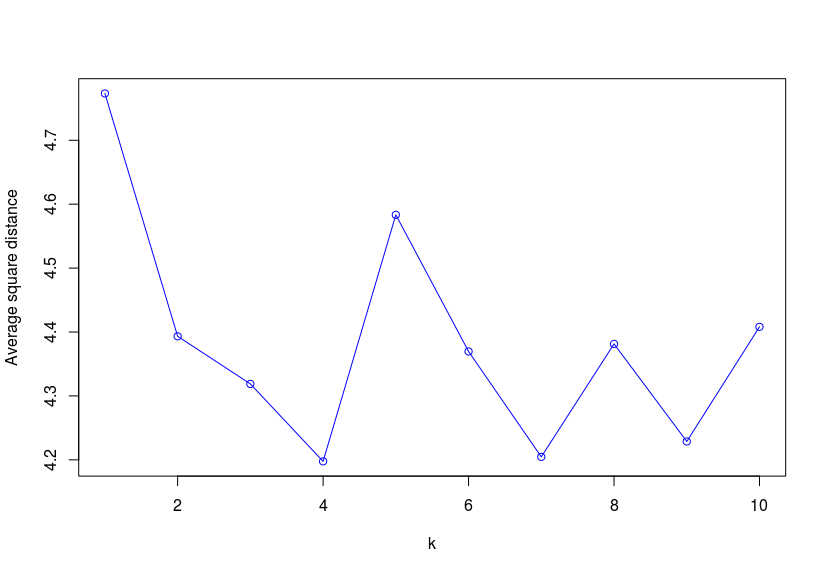}
\caption{We selected $k=4$, which has the minimum elastic distance in sampling importance resampling.}
\label{fig:sampling}
\end{figure}
Afterward, we calculated distances and selected samples with elastic distance, followed by constructing a persistence landcape obtained from the finite mixing of persistence landscapes densities. We construct a filtered simplicial complex as follows. First, we formed the Vietoris-Rips complex $R(X,\epsilon)$, which consists of simplices with vertices in $X = \lbrace x_1,\ldots,x_n \rbrace \subset \mathbb{R}^d$ and diameter at most $\epsilon$. The sequence of Vietoris-Rips complex obtained by gradually increasing the radius $\epsilon$  create a filtration of complexes. We denote the limit of filtration of the Vietoris-Rips complex with $5$ and maximum dimension of homological feature with $1$($0$ for components, $1$ for loops). To compute landscape function in Equation \ref{eq:landscapeBubenik}, we set $t \in [0,5], k = 1$. Figure \ref{fig:torusLanscape} represents persistence landscapes population and sample from the parametrized  curve. According to algorithms in \cite{Pakniat}, we computed confidence interval of the finite mixture model $g=7$ with normal confidence interval $3.472991$ and $3.257315$. We use a Gaussian kernel with bandwidth $h=0.00693$, and we select mixing proportation $\pi_i=\dfrac{1}{g}=0.142857143$ for simplicity. In order to obtain number of components, we calculated IMSE for $100$ times run for each component and plotted mean of run in figure \ref{fig:imse}. So, we selected a minimum value of IMSE with $g=7$ as the final density in equation \ref{eq:finiteMix}. \newline
Eventually, rather than simple sampling from population, we selected only those landmarks that are important on a parametrized curve. Moreover, rather than evaluating one situation of persistence landscape, we assessed different situation of these that direct influence on the precision of IMSE. Therefore, we can obtain a model that is more accurate than before.
\begin{table}[ht]
\begin{center}
\begin{tabular}{ *2c }    \toprule
\emph{g} & \emph{IMSE}  \\\midrule 
$1$ & $0.1283964$ \\ 
$2$ & $0.1312942$ \\ 
$3$ & $0.1443199$ \\
$4$ & $0.1488445$ \\
$5$ & $0.1436486$ \\
\bottomrule
 \hline
\end{tabular}
\quad
\begin{tabular}{ *2c }    \toprule
\emph{g} & \emph{IMSE}  \\\midrule 
$6$ & $0.1275415$ \\
$7$ & $0.1270364$ \\
$8$ & $0.1285071$ \\
$9$ & $0.1359496$ \\
$10$ & $0.1319306$ \\
\bottomrule
 \hline
\end{tabular}
\end{center}
\caption{We run the algorithm $100$ times for each component and present the mean  IMSE for $1 \leq g \leq 10$.}
\label{tbl:bootstrap1}
\end{table}
 \begin{figure}[H]
\centering
\includegraphics[scale=0.5]{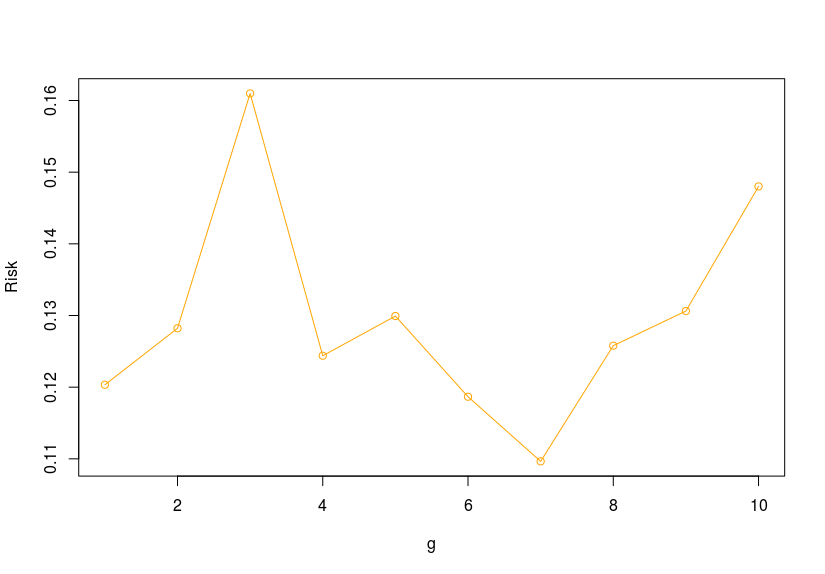}
\caption{IMSE values plotted for $1 \leq g \leq 10$ for the first run.}
\end{figure}
 \begin{figure}[H]
\centering
\includegraphics[scale=0.5]{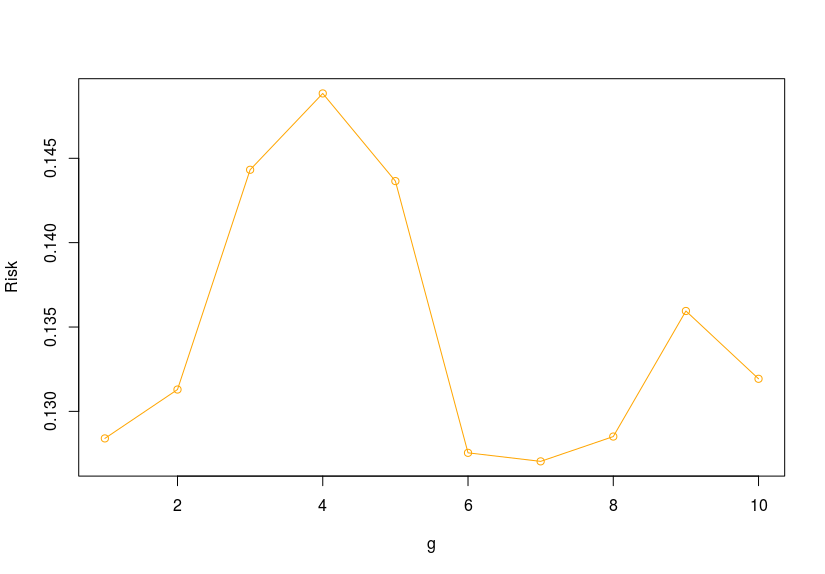}
\caption{The plot for $100$ runs for $1 \leq g \leq 10$ and the obtained mean of IMSEs}
\label{fig:imse}
\end{figure}
\subsection*{Discussion}
The present study has two objectives. The first is how we can sample from an important location on the parametrized curve such that it can reduce space complexity from the dataset. The second is the computation of the finite mixture model of persistence landscapes with mixing proporation according to $\pi_i=\dfrac{1}{g}$.
For the first goal, we represent sampling importance resampling with the elastic metric distance, based on square root velocity function, which is invariant to reparameterization, scaling, and rigid motion. Next, we represent persistence landscapes, which are mixing of persistence landscapes with sampling obtained from the first goal. Thus, it is clear that mixture models give descriptions of entire subgroups rather than assignments of individuals to those subgroups. \newline In figure \ref{fig:torusLanscape}, we computed persistence landscapes from a population with samples slightly different from each other. Although we selected a sample with sampling importance resampling, we obtained a confidence interval of samples leading to the true value of the mean of the population. This difference comes from construct simplicial complex from a sample rather than the population. 
This difference in the value of persistence landscape might be due to fact that the created holes may take different values.
 One of the suggestions for improvement of this difference would be studying the mean of estimator value when choosing $s$ in process of sampling importance resampling. If we also adjust mixing proportion of finite mixture model with respect to densities, we can obtain the $\pi_i$ more accurately. There are some ideas that can be applied to extend this approach. Selection of parameter and shrinkage estimator has been an important approach when dealing with the model complexity.
\newpage
\vskip 0.2in


\begin{thebibliography}{1}
\bibitem{CarlssonTopologyData}
Gunnar Carlsson, \emph{Topology and Data}, 46, 2, 255-308,\relax Bulletin of the American Mathmatical society, 2009, S0273-0979(09)01249-X.

\bibitem{CarlssonSurveyActa}
Gunnar Carlsson, \emph{Topological pattern recognition for point cloud data}, 23, 289-368,\relax Acta Numerica, 2014, 10.1017/S0962492914000051.

\bibitem{Ghrist}
Robert Ghrist, \emph{Barcode: The persistent topology of data}, 1,\relax 61-75, Bulletin, 2007.

\bibitem{singh}
Gurjeet Singh, Facundo Memoli and Gunnar Carlsson, Topological Methods for the Analysis of High Dimensional Data Sets and 3D Object Recognition, \emph{Eurographics Symposium on Point-Based Graphics (2007)}.

\bibitem{TurnerFrechet}
Katharine Turner and Yuriy Mileyko and Sayan Mukherjee and John Harer, \emph{Frechet Means for Distributions of Persistence Diagrams}, \relax arXiv:1206.2790, 2012.

\bibitem{TurnerMeans}
Katharine Turner, \emph{Means and Medians of Sets of Persistence Diagrams}, \relax arXiv:1307.8300, 2013.

\bibitem{RobinsonArXiv}
Andrew Robinson and Katharine Turner, \emph{Hypothesis Testing for Topological Data Analysis}, arXiv:1310.7467, 2013.

\bibitem{MileykoProbabilityMeasures}
Yuriy Mileyko and Sayan Mukherjee and John Harer, \emph{Probability measures on the space of persistence diagrams},27,12\relax Inverse Problems, 2011,IOP Publishing Ltd.

\bibitem{MichelStatistical}
Bertrand Michel, \emph{A Statistical Approach to Topological Data Analysis}, tel-01235080, version 1, 2015.

\bibitem{BlumbergStatistical}
Andrew J. Blumberg and Itamar GalMichael and A. Mandell and Matthew Pancia, \emph{A Statistical Approach to Topological Data Analysis}, 14, 4, 745–789, \relax Foundations of Computational Mathematics, 2014, 10.1007/s10208-014-9201-4.

\bibitem{ChazalStochastic}
Frederic Chazal and Brittany Terese Fasy and Fabrizio Lecci and Alessandro Rinaldo and  Larry Wasserman, \emph{Stochastic Convergence of Persistence Landscapes and Silhouettes},474, \relax Proceedings of the thirtieth annual symposium on Computational geometry, 2014, 10.1145/2582112.2582128.

\bibitem{JMLR:v16:bubenik15a}
Peter Bubenik, \emph{Statistical Topological Data Analysis using Persistence Landscapes}, 16,77-102,\relax Journal of Machine Learning Research, 2015, 16, 77-102.

\bibitem{Atienta}
Nieves Atienza and Roc{\'{\i}}o Gonz{\'{a}}lez{-}D{\'{\i}}az and M. Soriano{-}Trigueros, On the stability of persistent entropy and new summary functions for TDA, \emph{CoRR}, abs/1803.08304, 2018, arXiv:1803.08304.

\bibitem{SilvaSensorNetworks}
Vin de Silva and Robert Ghrist, \emph{Homological Sensor Networks}, 54, 1,\relax Notices of the AMS, 2007.

\bibitem{CarlssonNaturalImage}
Gunnar Carlsson and Tigran Ishkhanov and Vin de Silva and  Afra Zomorodian, \emph{On the Local Behavior of Spaces of Natural Images}, 76, 1, 1–12, \relax International Journal of Computer Vision, 2008, 10.1007/s11263-007-0056-x.


\bibitem{WagnerTowards}
Hubert Wagner and Pawel Dlotko, \emph{Towards topological analysis of high-dimensional feature spaces}, 121, 21-26,\relax Computer Vision and Image Understanding, 2014,10.1016/j.cviu.2014.01.005.

\bibitem{PranavCosmicWeb}
Pratyush Pranav and Herbert Edelsbrunner and Rien van de Weygaert and Gert Vegter and Michael Kerberand  Bernard J. T. Jones and Mathijs Wintraecken, \emph{The topology of the cosmic web in terms of persistent Betti numbers}, 465, 4,\relax Monthly Notices of the Royal Astronomical Society, 2017, https://doi.org/10.1093/mnras/stw2862.

\bibitem{Stolz2017}
Stolz, Bernadette J. and Harrington, Heather A. and Porter, Mason A , \emph{Persistent homology of time-dependent functional networks constructed from coupled time series}, 2017, 10.1063/1.4978997,\relax Chaos, American Institute of Physics.

\bibitem{entropy1}
Chintakunta, Harish and Gentimis, Thanos and Gonzalez-Diaz, Rocio and Jimenez, Maria-Jose and Krim, Hamid, An Entropy-based Persistence Barcode, \emph{Pattern Recogn.}, 2015, 48, 0031-3203, 391-401.

\bibitem{entropy2}
Matteo Rucco and Filippo Castiglione and Emanuela Merelli  and Marco Pettini, Characterisation of the Idiotypic Immune Network Through Persistent Entropy,
\emph{Proceedings of ECCS 2014}, 978-3-319-29228-1, 117-128, Springer International Publishing, 2016.

\bibitem{BubenikCategorification}
Peter Bubenik and Jonathan A. Scott, \emph{Categorification of Persistence Homology},51,3,600–627, \relax Discrete and Computational Geometry, 2014, 10.1007/s00454-014-9573-x.

\bibitem{BubenikGromovHausdorff}
Peter Bubenik and Vin De Silva and Jonathan Scott, \emph{Categorification of Gromov-Hausdorff Distance and Interleaving of Functors}, \relax arxiv:1707.06288v2, 2017.

\bibitem{BubenikMetricGeneralized}
Peter Bubenik and Vin De Silva and Jonathan Scott, \emph{Metric for Generalized Persistence Modules}, \relax arxiv:1312.3829v3, 2015.

\bibitem{BubenikHigherInterpolation}
Peter Bubenik and Vin de Silva and Vidit Nanda, \emph{Higher Interpolation and Extension for Persistence Modules}, 1, 1, 272–284, \relax SIAM Journal on Applied Algebra and Geometry, 2016, 10.1137/16M1100472.

\bibitem{ShapeAnalysis}
Ian L. Dryden, Kanti V. Mardia, \emph{Statistical Shape Analysis: With Applications in R}, \relax Wiley Series in Probability and Statistics, 2016, ISBN:9780470699621.

\bibitem{Chauveau}
Didier Chauveau, Vy Thuy Lynh Hoang, Nonparametric mixture models with conditionally independent multivariate component densities, 2015, hal-01094837v2.

\bibitem{baar}
Christian Bar, Elementary Differential Geometry, \emph{Cambridge University Press}, 2010, 9780521721493.

\bibitem{Srivastava}
A. Srivastava and E. Klassen and S. H. Joshi and I. H. Jermyn, Shape Analysis of Elastic Curves in Euclidean Spaces,  \emph{IEEE Transactions on Pattern Analysis and Machine Intelligence}, 2011, 33, 7, 1415-1428, 10.1109/TPAMI.2010.184.

\bibitem{Gibbons2010}
Jean Dickinson Gibbons and Subhabrata Chakraborti, \emph{Nonparametric Statistical Inference, Fourth Edition}, ISBN:9781420077612, 2010, Chapman and Hall/CRC,  Statistics: Textbooks Monographs.

\bibitem{Lehmann}
 Erich L. Lehmann, George Casella, Theory of Point Estimation, \emph{Springer-Verlag New York}, 1998, 10.1007/b98854.

\bibitem{Eskandari2016}
Eskandari, Farzad and Ormoz, Ehsan, Finite Mixture of Generalized Semiparametric Models: Variable Selection via Penalized Estimation, \emph{Communications in Statistics: Simulation and Computation}, 2016, 3744--3759, 10.1080/03610918.2014.953687.
 
 \bibitem{KhuyenGeneralizedCech}
Ngoc Khuyen Le and Philippe Martins and Laurent Decreusefond and Anais Vergne, \emph{Construction of the generalized Cech complex}, \relax arXiv:1409.8225, 2014.


\bibitem{ChambersVietorisRips}
Erin W. Chambers and Vin de Silva and Jeff Erickson and Robert Ghrist, \emph{Vietoris–Rips Complexes of Planar Point Sets}, 44, 1, 75-90, \relax Discrete and Computational Geometry, 2010, 10.1007/s00454-009-9209-8.

\bibitem{DeyGraphInduced}
Tamal K. Dey and Fengtao Fan and Yusu Wang, \emph{Graph Induced Complex on Point Data}, 107–116,\relax In Proceedings of the Twenty-ninth Annual Symposium on Computational Geometry, 2013, ACM, ISBN: 978-1-4503-2031-3.

\bibitem{SilvaWitness}
Topological estimation using witness complexes, \emph{Vin de Silva and Gunnar Carlsson}, \relax The Eurographics Association, 2004, 10.2312/SPBG/SPBG04/157-166.

\bibitem{Boissonnat2014}
Boissonnat, Jean-Daniel and Maria, Cl{\'{e}}ment, \emph{The Simplex Tree: An Efficient Data Structure for General Simplicial Complexes}, 2014, http://link.springer.com/10.1007/s00453-014-9887-3,406--427.

\bibitem{AfraBook}
Afra Zomorodian, \emph{Topology for Computing},\relax Cambridge University Press, 2005.

\bibitem{Zomorodian2005}
Afra Zomorodian and Gunnar Carlsson, \emph{Computing Persistent Homology}, 249-274,\relax Discrete {\&} Computational Geometry, 2005, 10.1007/s00454-004-1146-y.

\bibitem{WassermanNonparametric}
Larry Wasserman, \emph{All of Nonparametric Statistics},\relax Springer Texts in Statistics, 2006, 10.1007/0-387-30623-4.

\bibitem{TDARFasy}
Brittany Terese Fasy and Jisu Kim and Fabrizio Lecci and Clement Maria, \emph{Introduction to the R package TDA},\relax arXiv:1411.1830, 2014.

\bibitem{alphaShape}
Edelsbrunner, Herbert and M\"{u}cke, Ernst P., Three-dimensional Alpha Shapes,
\emph{ACM Trans. Graph.}, 1994, 13, 1, 0730-0301, 43-72, 30, 10.1145/174462.156635.

\bibitem{Automatic}
J. Strait and S. Kurtek, Bayesian Model-Based Automatic Landmark Detection for Planar Curves, 2016 IEEE Conference on Computer Vision and Pattern Recognition Workshops (CVPRW), 2016, 1041-1049,10.1109/CVPRW.2016.134.

\bibitem{Pakniat}
Soroush Pakniat and Farzad Eskandari, \textit{Nonparametric Risk Assessment and Density Estimation for Persistence Landscapes}, 2018, arXiv:1803.03677.
\end{thebibliography}
\end{document}